\documentclass[submission,copyright,creativecommons]{eptcs}

\usepackage{graphicx}
\usepackage{tikz}
\usepackage{amsthm}
\usepackage{amsfonts}
\usepackage[british]{babel}
\usepackage[utf8]{inputenc}

\usetikzlibrary{shapes.geometric, arrows}

\tikzstyle{startstop} = [rectangle, rounded corners, minimum
width=3cm, minimum height=1cm,text centered, draw=black, fill=red!30] 

\tikzstyle{io} = [trapezium, trapezium left angle=70, trapezium right
angle=110, minimum width=3cm, minimum height=1cm, text centered,
draw=black, fill=blue!30] 

\tikzstyle{process} = [rectangle, minimum width=3cm, minimum
height=1cm, text centered, draw=black, fill=orange!30]

\tikzstyle{decision} = [diamond, minimum width=3cm, minimum
height=1cm, text centered, draw=black, fill=green!30] 

\tikzstyle{arrow} = [thick,->,>=stealth]

\newtheorem{definition}{Definition}
\newtheorem{theorem}{Theorem}
\newtheorem{lemma}{Lemma}


\title{Considerations on Approaches and Metrics in Automated Theorem
  Generation/Finding in Geometry}


\author{Pedro Quaresma%
  \thanks{Partially supported by FCT -- Foundation for Science and
    Technology, I.P., within the scope of the project CISUC --
    UID/CEC/00326/2020 and by European Social Fund, through the
    Regional Operational Program Centro 2020.}%
  \institute{
    CISUC / Department of Mathematics, \\
    University of Coimbra, Portugal}\email{pedro@mat.uc.pt}\and
  Pierluigi Graziani%
  \thanks{Partially supported by Italian Ministry of Education,
    University and Research through the PRIN 2017 project ``The
    Manifest Image and the Scientific Image'' prot. 2017ZNWW7F\_004.}%
  \institute{Department of Pure and Applied Sciences, University of Urbino, Italy}
  \email{pierluigi.graziani@uniurb.it}
  \and
  Stefano M. Nicoletti%
  \thanks{Funded by ERC
    Consolidator Grant 864075 (\emph{CAESAR}).}%
  \institute{Formal Methods and Tools, University of Twente, Enschede, the Netherlands}
  \email{s.m.nicoletti@utwente.nl}
  }



\begin{document}

\maketitle


\begin{abstract}
  

  The pursue of what are properties that can be identified to permit
  an automated reasoning program to generate and find new and
  interesting theorems is an interesting research goal (pun
  intended). The automatic discovery of new theorems is a goal in
  itself, and it has been addressed in specific areas, with different
  methods. The separation of the ``weeds'', uninteresting, trivial
  facts, from the ``wheat'', new and interesting facts, is much
  harder, but is also being addressed by different authors using
  different approaches.  In this paper we will focus on geometry.
  We present and discuss different approaches for the automatic
  discovery of geometric theorems (and properties), and different
  metrics to find the interesting theorems among all those that were
  generated. After this description we will introduce the first result
  of this article: an undecidability result proving that having an
  algorithmic procedure that decides for every possible Turing Machine
  that produces theorems, whether it is able to produce also
  interesting theorems, is an undecidable problem. Consequently, we
  will argue that judging whether a theorem prover is able to produce
  interesting theorems remains a non deterministic task, at best a
  task to be addressed by program based in an algorithm guided by
  heuristics criteria.  Therefore, as a human, to satisfy this task
  two things are necessary: an expert survey that sheds light on what
  a theorem prover/finder of interesting geometric theorems is,
  and---to enable this analysis---other surveys that clarify metrics
  and approaches related to the interestingness of geometric
  theorems. In the conclusion of this article we will introduce the
  structure of two of these surveys ---the second result of this
  article--- and we will discuss some future work.

\end{abstract}

\section{Introduction}
\label{sec:introduction}

In \emph{Automated Reasoning: 33 Basic Research Problems}, Larry Wos,
wrote about the problems that computer programs that reason
face. Problem 31 is still open and object of active
research~\cite{Wos1988,Wos1993}:

\begin{quote}
  \emph{Wos' Problem 31}---What properties can be identified to permit
  an automated reasoning program to find new and interesting theorems,
  as opposed to proving conjectured theorems?
\end{quote}

Two problems in a single sentence: \emph{new} and \emph{interesting}
theorems. The automatic discovery of new theorems is a goal in
itself, it has been addressed in specific areas, with different
methods. The separation of the ``weeds'', uninteresting, trivial
facts, from the ``wheat'', new and interesting facts, is much harder,
but is being addressed also, by different authors using different
approaches.

Paraphrasing, again, Wos, ``since a reasoning program can be
instructed to draw some (possible large) set of conclusions'' what
should be the ``criteria that permit the program to select from those
the ones (if any) that correspond to interesting results.''

Different fields have come across the finding of new and interesting
theorems' questions.



Regarding the novelty side: there are different views of approaching
new mathematical results. One of those approaches is the systematic
exploration of a given broad area of mathematical knowledge,
generating, by different means, new theorems and expecting to find
interesting ones among those generated (that will be analysed in
section~\ref{sec:InterestingTheorems})~\cite{Cheng1995,Cheng2000,Colton2000,Gao2015,Gao2019,Kovacs2021c,Kovacs2022a,Puzis2006}.
Another approach is given by the pursue of mathematical discovery in
specific areas, e.g. \emph{Computing Locus
  Equations}~\cite{Abanades2014,Botana2007}, \emph{Automated Discovery
  of Angle Theorems}~\cite{Todd2021}, \emph{Automated Discovery of
  Geometric Theorems Based on Vector Equations}~\cite{Peng2021},
\emph{Automated Generation of Geometric Theorems from Images of
  Diagrams}~\cite{Chen2014}, \emph{Automatic Discovery of Theorems in
  Elementary Geometry}~\cite{Recio1999}. These approaches do not aim
to address the problem of automated theorem finding in itself but, for
example, to find complementary hypotheses for a given geometric
statements to become true~\cite{Recio1999} i.e. automatic discovery
for specific areas.\footnote{We left aside the notion of discovery in
education, given that, in that area, the goal is the student's
discovery of ``new'' (for them) theorems, giving the student the
possibility of freely making conjectures and having an
interactive/automatic deduction support in the exploration of those
``new'' theorems~\cite{Botana2002b,Hanna2019,icmi19v1,icmi19v2}.}




Regarding the interestingness side we are aware that relevant
literature can be found in different areas. For example in automated
theorem proving~\cite{Colton2000,Gao2015,Gao2019,Puzis2006} and in
sociological studies on the concept of
proving~\cite{de1979social,mackenzie1995automation,mackenzie1999slaying},
in cognitive and educational science studies on the concept of
proving~\cite{aldon2010,Bundy2005,d2016formula,hemmi2017misconceptions,polya2004solve,stylianides2018advances}
and in semiotics and epistemology of
mathematics~\cite{Arzarello2010,aschbacher2005highly,Avigad2006,Barendregt2005,burge1998computer}.

Despite the cited studies, the Wos' problem is still on the table. On the contrary, a new result of undecidability can be added to the problem, i.e. having an algorithmic procedure that decides for every possible Turing Machine that produces theorems, whether it is able to produce also interesting theorems, is an undecidable problem.
Consequently, we can argue that judging whether a theorem prover is able to produce interesting theorems remains a non deterministic task, at best a task to be addressed by program based in an algorithm guided by heuristics criteria.  Therefore, as a human, to satisfy this task we need expert survey that sheds light on what a theorem prover/finder of interesting geometric theorems is, and---to enable this analysis---other surveys that clarify metrics and approaches related to the interestingness of geometric theorems.






\textbf{Structure of the paper.} In section~\ref{sec:TheoremDiscovery}
the issue of Automated Theorem Generation (ATG) is discussed.  In
section~\ref{sec:thedeductiveapproach} we discuss the deductive
approach in ATG.  In section~\ref{sec:InterestingTheorems} the issue
of Automated Theorem Finding (ATF) is analysed. In
section~\ref{sec:undecidabilityResult} we present an undecidability
result concerning the problem of finding interesting theorems and its conceptual consequences. In section~\ref{sec:surveys} we will introduce the structure of two surveys to empirically explore the interestingness of theorems in geometry and its potential application in theorem proving/finding (a third survey). Finally, we will discuss some future work.


\section{Automated Theorem Generation}
\label{sec:TheoremDiscovery}

Automated theorem generation, independently of being interesting, or
not, can be addressed in several ways~\cite{Puzis2006}.

\paragraph{The Inductive approach,}
\label{par:theinductiveapproach}
is a natural approach. Conclusions are drawn by going from the
specific to the general. Exploring a given domain, seeking for
properties that emerge from a set of particular cases and making a
conjecture about the general case.

Dynamic Geometry Software (DGS) can be seen as software environments
to inductively explore new knowledge. Making a geometric construction,
constrained by a given set of geometric properties, and then moving
the free point around will show all the fix-points, conjecturing if
those new fixed relations between objects are true in all cases, or
not. For example the Pappus' Theorem, in this case, a well-known
theorem: are the intersection points (see Figure~\ref{fig:pappus})
$G$, $H$ and $I$, collinear? By moving, in the DGS, the free-points it
seems that they are, it remains to prove it.

\begin{figure}[htbp!]
  \begin{center}
    \includegraphics[width=0.35\textwidth]{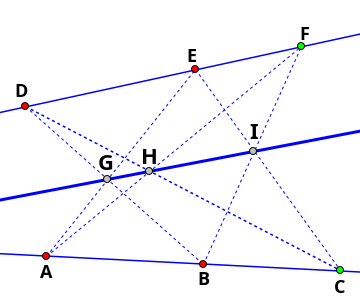}
  \end{center}
  \caption{Pappus' Theorem}
  \label{fig:pappus}
\end{figure}

The inductive approach has the advantage of being stimulated by
observations in the domain. but has the disadvantage that induction is
unsound. A famous example of such unsound inductive approach can be
seen in the Euclid Parallel lines Postulate, that nevertheless was
very fruitful, giving raise to different geometries.


\paragraph{The Generative approach,}
\label{par:thegenerativeapproach}
i.e. the generation of conjectures, testing them for theorem-hood. The
simplest form of generation is syntactic, in which conjectures are
created by mechanical manipulation of symbols,
e.g.~\cite{Plaisted1994}. The \emph{MCS\/} program generates
conjectures syntactically and filters them against models of the
domain~\cite{Zhang1999}. A stronger semantically based approach is
taken by the \emph{HR\/} program, which generates conjectures based on
examples of concepts in the domain~\cite{Colton2002}. A theory
exploration system called \emph{QuickSpec}, works by interleaving term
generation with random testing to form candidate
conjectures~\cite{Johansson2021}. In~\cite{Johansson2021} the
conjecture generation approaches are classified into three categories:
heuristic rule-based systems, term generation-and-testing and neural
network-based systems.  The \emph{RoughSpec} system adds to
\emph{QuickSpec} the notion of \emph{shapes} of theorems, specifying
the shapes of conjectures the user is interested in, and thus
limiting the search~\cite{Einarsdottir2020}.

Like induction, generation is unsound. However, if the rules by which
the generation is performed are sufficiently conservative then this
approach may generate a higher fraction of theorems than the inductive
approach.

\paragraph{The Manipulative Approach,}
\label{par:themanipulativeapproach}
conjectures are generated from existing theorems. An existing theorem
is modified by operations such as generalisation, specialisation,
combination, etc. This approach is used in abstraction mapping, which
converts a theorem to a simpler theorem, and uses a solution to the
simpler theorem to help find a solution to the original
theorem~\cite{Plaisted1980}. Manipulation of ATP theorems has also
been used to produce new theorems for testing the robustness of ATP
systems' performances~\cite{Voronkov2000}.

An advantage of the manipulative approach is that, if the
manipulations are satisfiability preserving, then theorems, rather
than conjectures, are produced from existing theorems. However, the
conjectures produced by the manipulative approach are typically
artificial in nature, and thus uninteresting.

\paragraph{The Deductive Approach,}
\label{par:thedeductiveapproach}
consequences are generated by application of sound inference rules to
the axioms and previously generated logical consequences. This can be
done by an appropriately configured saturation-based ATP system.

The advantage of this approach is that only logical consequences are
ever generated. The challenge of this approach is to avoid the many
uninteresting logical consequences that can be generated.

\section{The Deductive Approach}
\label{sec:thedeductiveapproach}

Some systems addresses, explicitly, the generation of new geometric
results using different approaches. In the following some of these
approaches are described.

\subsection{Strong Relevant Logic-based Forward Deduction Approach}
\label{sec:StrongRelevantLogicBasedForwardDeductionApproach}


In~\cite{Gao2014a} the authors argue for the fundamental difference
between the Automated Theorem Proving (ATP) and the Automated Theorem
Finding (ATF). ATP is the process of finding a justification for an
explicitly specified statement from given premises which are already
known facts or previously assumed hypotheses. ATF is the process to
find out or bring to light that which was previously unknown. Where
ATP is all about known (old) facts, ATF is about previously unknown
conclusions from given premises.  Jingde Cheng~\cite{Cheng2000} claims
that classical mathematical logic, its various classical conservative
extensions, and traditional (weak) relevant logics cannot
satisfactorily underlie epistemic processes in scientific discovery,
presenting an approach based on strong relevant logic. Hongbiao Gao et
al. have followed this approach applying it for several domains such
as NBG set theory, Tarski's Geometry and Peano's
Arithmetic~\cite{Gao2017,Gao2014a,Gao2018,Gao2019}

\subsection{Rule Based Systems}
\label{sec:DeductiveDatabases}

The rule-based automated deduction system are often used when the
proof itself is an object of interest (and not only the end result),
given that the proofs are developed from the hypothesis and sets of
axioms, to the conclusion by application of the inference rules, the
proofs are ``readable''.

Example of such approaches can be seen in systems like
\emph{QED-Tutrix}~\cite{Font2021,Font2018} and
\emph{JGEx}~\cite{Ye2011}, both for geometry. In the tutorial system
\emph{QED-Tutrix}, the rule based automated theorem prover goal is to
find the many possible branches of the proof tree, in order to be able
to help the student approaching the proof of a geometric conjecture. In
the \emph{JGEx} system we can have the proof in a ``readable'' and
``visual'' renderings and also the set of all properties that can be
deduced from the construction.

One of the ATP built-in in \emph{JGEx} is an implementation of the
geometry deductive database method~\cite{Chou2000,Ye2011}. Using a
breadth-first forward chaining a fix-point for the conjecture at
hand is reached. For that geometric construction and the rules of the
method, the fix-point gives us all the properties that can be
deduced, some already known facts, but also new facts (not necessary
interesting ones).

The geometry deductive database method proceeds by using a simple
algorithm where, starting from the geometric construction $D_0$, the
rules, $R$, are applied over and over till a fix-point, $D_k$ is
reached:
\begin{equation}
  \framebox{$D_0$}\quad \stackrel{R}{\subset}\quad
  \framebox{$D_1$}\quad \stackrel{R}{\subset}\quad
  \quad\cdots\quad \stackrel{R}{\subset}\quad
  \framebox{$D_k$}\quad (\mbox{fix-point})
  \label{eq:gddmFixedPoint}
\end{equation}

In figure~\ref{fig:jgexFixedPoint} an example, using \emph{JGEx}, is
shown. On the right, the geometric construction, on the left, the
fix-point, with all the facts that were found for that construction.

\begin{figure}[htbp!]
  \begin{center}
    \includegraphics[width=0.7\textwidth]{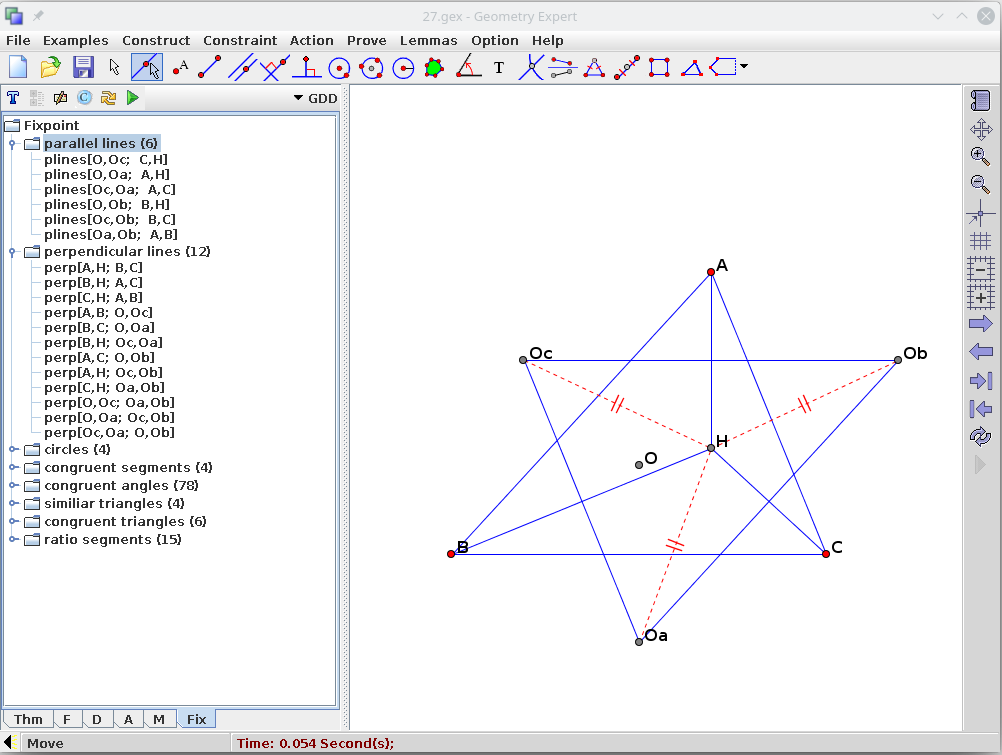}
  \end{center}
  \caption{Fix-point in \emph{JGEx}}
  \label{fig:jgexFixedPoint} 
\end{figure}

A new open source implementation of this method,
\emph{OGP-GDDM},\footnote{\url{https://github.com/opengeometryprover/OpenGeometryProver}}
is described in~\cite{Baeta2023}. It will be integrated in the
\emph{Open Geometry Prover Community Project}
(OGPCP)~\cite{Baeta2021}. One of the medium-term goals of the
\emph{OGP-GDDM} project, is to develop a meta-prover, a program
capable to receive different sets of rules and synthesise a specific
ATP for those rules.

\subsection{Algebraic Approaches}
\label{sec:AlgebraicApproaches}

A similar approach is taken in the well-known dynamic geometry system
\emph{GeoGebra}.\footnote{\url{https://www.geogebra.org/}} The
\emph{GeoGebra Discovery}
version\footnote{\url{https://github.com/kovzol/geogebra-discovery}}
has the capability to find, from a user defined geometric
construction, properties about that construction. \emph{GeoGebra Discovery}
reports some facts that were systematically checked from a list of
possible features including identical points, parallel or
perpendicular lines, equal long segments, collinearity or
concyclicity. This is not a deductive method so the generation process
must be externally verified, \emph{GeoGebra Discovery} do that by recurring
to a built-in algebraic automated theorem prover based in the Gr{\"o}bner
bases method~\cite{Kovacs2021c,Kovacs2022a}.

\section{Automated Theorem Finding}
\label{sec:InterestingTheorems}

Apart from our research goal of finding the interesting geometric
theorems among all those that were automatically generated, the pursue
of measures of interestingness has applicability in the interactive
and automated theorem proving area. In that area a common use of
interestingness is to improve the efficiency of the programs,
tailoring the search space, making the search depth limited and
guaranteeing that only comprehensible concepts are
produced~\cite{Colton2000}.

A goal, pursued with different approaches by many researchers, is the
creation of strong AI methods capable of complex research-level
proofs, mathematical discovery, and automated formalisation of today's
vast body of mathematics~\cite{Rabe2021}.  The \emph{MATHsAiD}
(Mechanically Ascertaining Theorems from Hypotheses, Axioms and
Definitions) project aimed to build a tool for automated
theorem-discovery, from a set of user-supplied axioms and
definitions. In the words of its authors, \emph{MATHsAiD 2.0} can
conjecture and prove interesting Theorems in high-level theories,
including Theorems of current mathematical significance, without
generating an unacceptable number of uninteresting
theorems~\cite{McCasland2017}.  The \emph{TacticToe} system, combines
reinforcement-learning with Monte-Carlo proof search on the level of
\emph{HOL4} tactics~\cite{Gauthier2021}. The \emph{ENIGMA-NG} system
uses efficient neural and gradient-boosted inference guidance for the
ATP \emph{E}, improving its efficiency~\cite{Chvalovsk2019}.  This two
systems, one for interactive provers and the other to automatic
provers, are examples of systems that uses discovery and filtering for
improving the efficiency of automated deduction systems.

\subsection{The Deductive Approach  Algorithm}
\label{sec:overallalgorithm}

The different approaches found in the
literature~\cite{Colton2002,Gao2014a,Puzis2006} share, in their general
lines, the same algorithm: for a given logical fragment, select a
initial set of facts and then a cycle of generation/filtering is
applied until some stopping condition is matched (see
Fig.~\ref{fig:NewInterestingTheoremsAlgorithm}).

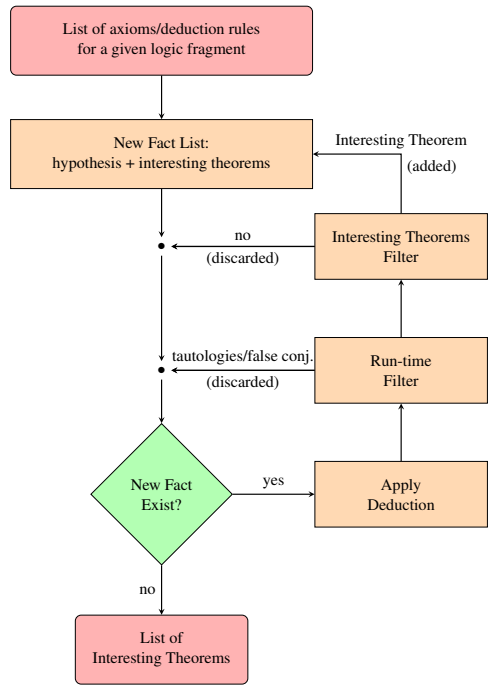
\begin{figure}[htbp!]
  \begin{center}
    \resizebox{0.4\textwidth}{!}{
      \begin{tikzpicture}[node distance=2.7cm]
        \node (start) [startstop] {\parbox{18em}{\begin{center}List of
              axioms/deduction rules\\ for a given logic fragment\end{center}}};
        \node (proc1) [process, below of=start] {\parbox{18em}{\begin{center}New
              Fact List:\\ hypothesis + interesting theorems\end{center}}};
        \node (preDec1) [below of=proc1, yshift=0.5cm] {$\bullet$};
        \node (preDec2) [below of=preDec1, yshift=-0.25cm] {$\bullet$};
        \node (dec1) [decision, below of=preDec2, yshift=-0.25cm]
            {\parbox{5.5em}{\centering{New Fact}\\ \centering{Exist?}}};
        \node (apply) [process, right of=dec1, xshift=3cm]
            {\parbox{10em}{\begin{center}Apply \\ Deduction\end{center}}};
        \node (proc2a) [process, above of=apply, yshift=0.25cm]
            {\parbox{10em}{\begin{center}Run-time\\ Filter\end{center}}};
        \node (proc2b) [process, above of=proc2a, yshift=0.25cm]
           {\parbox{10em}{\begin{center}Interesting Theorems\\ Filter\end{center}}};
        \node (stop) [startstop, below of=dec1, yshift=-1cm]
           {\parbox{10em}{\begin{center}List of\\ Interesting Theorems\end{center}}};
        \draw [arrow] (start) -- (proc1);
        \draw [arrow] (proc1) -- (preDec1);
        \draw [arrow] (dec1) -- node[anchor=south] {yes} (apply);
        \draw [arrow] (preDec1) -- (preDec2);
        \draw [arrow] (preDec2) -- (dec1);
        \draw [arrow] (apply) --  (proc2a);
        \draw [arrow] (proc2a) --  (proc2b);
        \draw [arrow] (dec1) -- node[anchor=east] {no} (stop);
        \draw [arrow] (proc2b) |- node[anchor=south] {Interesting Theorem}
        node[anchor=north west] {(added)} (proc1);
        \draw [arrow] (proc2b) --  node[anchor=north] {(discarded)} (preDec1);
        \draw [arrow] (proc2b) --  node[anchor=south] {no} (preDec1);
        \draw [arrow] (proc2a) --  node[anchor=north] {(discarded)} (preDec2);
        \draw [arrow] (proc2a) --  node[anchor=south]
        {tautologies/false conj.} (preDec2);
      \end{tikzpicture}}
  \end{center}
  \label{fig:NewInterestingTheoremsAlgorithm}
  \caption{New and Interesting Theorems Algorithm}
\end{figure}

\subsection{Filtering Interesting Theorems}
\label{sec:PuzzisClassification}

A first level of filtering (run-time filter) should discard the
obvious tautologies and also conjectures proved false by
empirical evidence. 

The filtering for interesting theorems or for uninteresting
conjectures, two sides of the same coin, is done by application of a
series of filters.  These filters are still to be validated, being of
speculative nature~\cite{Colton2000,Gao2018,Gao2019,Puzis2006}.

\begin{description}
\item[Obviousness:] the number of inferences in its
  derivation. Obviousness estimates the difficulty of proving a
  formula, it can be given by the number of inferences in its
  derivation. 
\item[Weight:] the effort required to read a formula. The weight score
  of a formula is the number of symbols it contains. 
\item[Complexity:] the effort required to understand a formula, the
  number of distinct function and predicate symbols it contains. 
\item[Surprisingness:]  measures new relationships between
  concepts and properties. 
\item[Intensity:] measures how much a formula summarises information
  from the leaf ancestors in its derivation tree.
\item[Adaptivity:] measures how tightly the universally quantified
  variables of a formula are constrained (for formulae in clause
  normal form). 
\item[Focus:] measures the extent to which a formula is making a
  positive or negative statement about the domain of application. 
\item[Usefulness:] measures how much an interesting theorem has
  contributed to proofs of further interesting theorems.
\end{description}

In spite of the relevance of these metrics, it would be appropriate to
construct an expert survey with which we could validate them by
referring to a significant public of experts. We believe this kind of
survey would be relevant not only to face Wos' problem, but also to
better understand how to construct and evaluate software that
generates/finds interesting theorems. Despite having only relevant
metrics and approaches regarding Wos' problem, while not yet having
formal results, we can prove a relevant result that concerns the
second issue, i.e., the question regarding \textit{Interesting Turing
  Machines}, i.e., programs capable of generating interesting new
geometric results.




\section{Undecidability Result}
\label{sec:undecidabilityResult}

In section~\ref{sec:PuzzisClassification} the application of filters
was discussed, these filters are based on some measures of
interestingness that are still to be validated and that are applied in
an heuristic way. Is it possible to have a deterministic approach,
i.e., is it possible to write a computer program that in a
deterministic way, find interesting theorems? We show, as an
application of the Rice's
theorem~\cite{rice1953,rogers1987theory,Sipser1997} (see
Lemma~\ref{the:RiceTheorem}), that it is undecidable to determine,
for a given Turing Machine, whether the language recognised by it has
the (non-trivial) property of finding interesting theorems.

\begin{definition}[Non-Trivial Property]
  \label{def:NonTrivialProperty}
  A property $p$ of a formal language is \emph{non-trivial} if:
  \begin{itemize}
  \item there exist a recursively enumerable language having the
    property $p$;
  \item there exist a recursively enumerable language not having the
    property $p$.
  \end{itemize}
\end{definition}

\begin{lemma}[Rice's Theorem]
  \label{the:RiceTheorem}
  Let $p$ be any non-trivial property of the language of a Turing
  machine. The problem of determining whether a given Turing machine's
  language has property $p$ is undecidable.
\end{lemma}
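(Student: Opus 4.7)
The plan is to prove Rice's Theorem by reduction from the halting problem (equivalently, the acceptance problem $A_{TM} = \{\langle M, w\rangle : M \text{ accepts } w\}$), which is known to be undecidable. I would argue by contradiction: assume there exists a Turing machine $R$ that, on input $\langle M \rangle$, decides whether $L(M)$ has the property $p$, and then construct from $R$ a decider for $A_{TM}$.

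First, I would perform a without-loss-of-generality step. Since $p$ is non-trivial, at least one of $p$ and its negation fails on the empty language. If the empty language already has property $p$, I replace $p$ by its complement $\neg p$; because $p$ is non-trivial, so is $\neg p$, and deciding one is equivalent to deciding the other. So I may assume the empty language does not satisfy $p$. By non-triviality, I can also fix some Turing machine $T$ whose language $L(T)$ does satisfy $p$; I regard $\langle T \rangle$ as a constant.

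The heart of the argument is the following reduction. Given an instance $\langle M, w \rangle$ of $A_{TM}$, I describe a new Turing machine $M'$ that on input $x$ first simulates $M$ on the fixed input $w$, ignoring $x$ during this phase; if that simulation ever accepts, $M'$ proceeds to simulate $T$ on $x$ and accepts $x$ precisely when $T$ does. By construction, if $M$ accepts $w$, then $L(M') = L(T)$, which has property $p$; if $M$ does not accept $w$, then $M'$ never reaches the second phase for any $x$, so $L(M') = \emptyset$, which by our normalisation does not have property $p$. Hence $L(M')$ has property $p$ if and only if $\langle M, w\rangle \in A_{TM}$. Since $\langle M' \rangle$ can be computed effectively from $\langle M, w \rangle$, feeding $\langle M' \rangle$ into the hypothesised decider $R$ would yield a decision procedure for $A_{TM}$, contradicting its undecidability.

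The main obstacle, and the step that deserves the most careful writing, is ensuring that $M'$ is genuinely a computable function of $\langle M, w\rangle$ and that its language behaves exactly as claimed in both branches; in particular, one must verify that when $M$ loops on $w$, $M'$ accepts no string at all (rather than accidentally accepting something during the simulation of $M$), and that when $M$ accepts $w$, the acceptance behaviour of $M'$ is genuinely independent of the input $x$ used during that first phase. Once this construction is nailed down, the rest of the argument is a short chain of equivalences, and the contradiction with the undecidability of $A_{TM}$ delivers the theorem.
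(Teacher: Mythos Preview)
Your proof is correct and is essentially the standard textbook argument (as in Sipser, which the paper cites). Note, however, that the paper does not actually prove this lemma: Rice's Theorem is stated as a known result with references \cite{rice1953,rogers1987theory,Sipser1997} and is then invoked as a black box in the proof of Theorem~\ref{the:undecibilityresult}. So there is no ``paper's own proof'' to compare against; you have supplied a valid proof where the authors chose to cite one instead.
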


\begin{theorem}[Undecidability Result]
  \label{the:undecibilityresult}  
  For any given Turing Machine, it is undecidable to determine,
  whether the language recognised by it has the property of finding
  interesting theorems.
\end{theorem}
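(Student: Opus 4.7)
The plan is to obtain the result as a direct application of Rice's Theorem (Lemma~\ref{the:RiceTheorem}), so the only work is to verify that ``finding interesting theorems'' is a non-trivial property in the sense of Definition~\ref{def:NonTrivialProperty}. Call this property $P_{\mathrm{int}}$: a recursively enumerable language $L$ has $P_{\mathrm{int}}$ when $L$ contains at least one formula judged interesting under whichever fixed criterion of interestingness one adopts (for instance any of the metrics surveyed in Section~\ref{sec:PuzzisClassification}, or any Boolean combination of them). The critical observation is that Rice's Theorem does not require us to resolve which theorems are interesting; it only requires two witnesses, one language inside the class and one outside.

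First I would exhibit a positive witness. It suffices to describe a Turing Machine $M_+$ that on every input enumerates a small fixed list containing at least one universally acknowledged interesting geometric theorem---Pappus' Theorem, for example, already singled out in Section~\ref{sec:TheoremDiscovery} as a paradigmatic case. The language recognised by $M_+$ is recursively enumerable by construction and clearly satisfies $P_{\mathrm{int}}$.

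Second I would exhibit a negative witness. The simplest choice is the machine $M_-$ that halts immediately and accepts nothing; if one prefers a machine that genuinely produces output, one can take a machine enumerating only trivial tautologies such as instances of $x = x$, which no plausible interestingness criterion admits. Its language is recursively enumerable and fails $P_{\mathrm{int}}$. With both witnesses in hand, $P_{\mathrm{int}}$ satisfies Definition~\ref{def:NonTrivialProperty}, and Rice's Theorem applies verbatim to yield the stated undecidability.

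The genuine obstacle is not mathematical but conceptual, and deserves explicit discussion in the write-up: the application of Rice's Theorem is only legitimate if ``interesting theorem'' picks out a definite, machine-independent class of strings, so that $P_{\mathrm{int}}$ is a property of the recognised language rather than of the syntactic presentation of the machine. The argument therefore must be framed relative to any fixed interestingness criterion broad enough to accept some theorem (so that $M_+$ is a witness) yet selective enough to reject some theorems (so that $M_-$ is a witness). This is precisely the reason the surveys announced in Section~\ref{sec:surveys} are methodologically necessary: they are what would license the choice of such a criterion, after which the undecidability result bites uniformly on every candidate notion of interestingness in this range.
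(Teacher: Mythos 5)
Your proposal is correct and follows essentially the same route as the paper: both arguments establish that ``contains an interesting theorem'' is a non-trivial property of a recursively enumerable language by exhibiting a positive witness (a language recognising a single acknowledged interesting theorem) and a negative witness (a language recognising only tautologies, or nothing), and then invoke Rice's Theorem. Your closing remark that the property must be extensional---a property of the recognised language rather than of the machine's presentation, fixed relative to some definite interestingness criterion---is a caveat the paper leaves implicit, and is worth keeping.
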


\begin{proof}
  All programs (Turing machines) capable of automated theorem proving
  and by extension generating/finding geometric theorems rely on a
  formal language to describe the geometric constructions, conjectures
  and proofs. For example we can consider the (full) \emph{First-Order Form}
  (FOF)\footnote{\url{http://tptp.cs.miami.edu/TPTP/QuickGuide/}} of
  \emph{TPTP}~\cite{Sutcliffe2017} and the formal axiomatic theories for
  geometry based on that language.\footnote{TPTP Axioms Files for
    geometry,
    \url{https://www.tptp.org/cgi-bin/SeeTPTP?Category=Axioms}, e.g.
    \emph{Tarski geometry axiom}, \texttt{GEO001} and \texttt{GEO002},
    \emph{Deductive Databases Method in Geometry}, \texttt{GEO012}.}


  Let $p$ be the property of that language that says that theorem $t$
  is interesting, for any conceivable definition of interestingness,
  then there exist a recursively enumerable language having the
  property $p$. It will be enough to restrict the language in such a way
  that the theorem $t$, and only this, would be recognised.  But,
  it also exist a recursively enumerable language not having the
  property $p$. It would be enough to restrict that language in such a
  way that only tautologies would be recognised. Tautologies are, for
  any conceivable definition of interestingness, uninteresting.  We
  have proved that $p$, the property that can establish if a given
  theorem is interesting, is a non-trivial property.
  
  Having establish that the property $p$ is non-trivial, then, by
  application of Rice's theorem, it is undecidable to determine for
  any given Turing machine $M$, whether the language recognised by $M$
  has the property $p$.  
\end{proof}

In other words, it is undecidable to have a deterministic program that
can find interesting problems. At best this is a task to be addressed
by programs based on algorithms guided by heuristics criteria.

\section{Designing Interesting Surveys}
\label{sec:surveys}

In light of our undecidability result, to understand what experts mean
by, ``a program that is able to also prove interesting theorems'',
must be done referring to empirical data, via the formulation of an
expert survey. However, for it to be fulfilled, one has to first reach
a minimal degree of agreement on the definition of interestingness of
theorems. How could one speak about programs that produce such
theorems? In order to achieve this agreement, an empirical exploration
of the notion of interestingness and of what it concretely entails is
paramount. This exploration requires to situate the notion of
interestingness historically and socio-culturally, considering
logical, epistemological, sociological, cognitive, semiotic and
pedagogical aspects of the issue.  Probably---and as Wos already
implies---interestingness entails different tangible properties, which
differ in given centuries, geographical locations and
societies. Moreover, in some cases we say that a theorem is
interesting for what we can call \textit{global reasons} e.g.,
Euclid's theorem on the infinitude of the set of prime numbers, Zorn's
lemma and G\"odel's Theorems are interesting due to their role in
mathematics, logic and computer science. Other times for \textit{local
  reasons} e.g., in relation to what we are teaching our students at
that moment. In order to assess which tangible properties---both
global and local---interestingness entails today, we are proposing to
conduct two \textit{expert surveys} with two statistically significant
pools of participants.

\paragraph{Influencing factors.} Gao et al. performed an
extensive analysis of areas like Set Theory, Peano's arithmetic and
Tarski's Geometry, looking for the relevance of structural factors,
such as the degree of logical connectives in the theorem, the
propositional schema of the formula formalising the theorem, the
abstract level of predicates and functions in the theorem and the
deduction distance of a
theorem~\cite{Colton2000,Gao2015a,Gao2019,Puzis2006}. Some of
these structural aspects might be related to our cognitive
dynamics. But also the epistemological role of a theorem with respect
to other theorems might be a relevant feature; or the educational role
that some theorems have with respect to some notions might influence
their interestingness. Finally, the history of a
theorem---e.g. Fermat's last theorem---could add points to its
interestingness, which, in the case of Fermat's last theorem, might be
already caused by the technicalities of the proof itself.

\paragraph{Designing the surveys.} Taking all these factors into
consideration, we would propose to design three surveys that question
experts from different fields.

Before describing the surveys below some clarifications are necessary.
We will use the term ``expert'' to mean mathematics teachers at
primary, middle, and high schools, and professors or researchers in
pure and applied mathematics at universities or at research centres.
Furthermore, we will focus on the case study of geometry, hence
interesting theorems in geometry. The reasons for this restriction to
geometry are as follows: on the one hand, considering all fields of
research in mathematics might require a too large number of experts
and could produce too many divergent ideas. On the other hand, having
in mind an application of the results in automatic theorem proving as
a target, it seems appropriate to move into an area were there are
many different methods and many automated provers implementing those
methods. Finally, geometry is a kind of language common to many areas
of mathematics and has been a domain for reflection since the early
years of mathematics teaching.

Finally, these surveys are intended to involve mathematics teachers,
but their outcome does not target mathematics education. Of course,
this is a possible target, but it is not the primary goal of these
starting surveys.

\subsection{Three Surveys}
In the first survey, we will ask the experts both to indicate some
situations in which they remember to have used the adjective
interesting concerning a theorem, and to explain the use of this
expression. In addition, we will ask experts to list several geometric
theorems they find interesting, and to list several geometric theorems
they find not interesting, both from elementary and higher geometry,
explaining the reasons for their answers (see
Appendix~\ref{sec:FirstSurvey}). This first survey is already under
way, the steering committee is already approaching it and the authors
of accepted papers in the conference, \emph{14th International
  Conference on Automated Deduction in Geometry} (ADG 2023)\footnote{
  ADG 2023, 14th International Conference on Automated Deduction in
  Geometry, Belgrade, Serbia, September 20-22, 2023.}, were invited to
participate. We are planning to enlarge it to our network of contacts
and we invite the interested reader to also participate, answering
it.\footnote{\url{https://docs.google.com/forms/d/e/1FAIpQLScIXZbLPBHTLvmQ28P30Cm_-lkOrM7e6rab7ho0WrAFwf_mbQ/viewform?usp=sf_link}}
We are planning to begin collect and analyse the answers in February,
2024.

We will use the information from this survey to define a list of
characteristics (A,B,C, \dots) of a theorem that offer sufficient
reasons to attribute interestingness to it.  We will assign weights to
the various characteristics by considering the answers to this first
survey.

After the first survey, we will implement a second one. This second
survey will consider a list of theorems that, in different
percentages, have the characteristics inferred from the first survey.
We will submit the second survey to a set of experts different from
those used in the first survey. We will ask these experts whether they
find the theorems listed interesting or not. We will ask them to rate,
using a Likert scale,\footnote{A Likert scale is a question which is a
  five-point or seven-point scale. The choices range from Strongly
  Agree to Strongly Disagree so the survey maker can get a holistic
  view of people's opinions. It was developed in 1932 by the social
  psychologist Rensis Likert.}  the degree of impact that having
certain characteristics plays in their attribution of interestingness
(see Appendix~\ref{sec:SecondSurvey}).

This second group will allow us to understand whether the
characteristics isolated through the first survey are sufficient
conditions to affirm that a theorem is interesting.

With an agreement on what an interesting theorem is, based on
empirical research, we could query experts in theorem
generators/finders design, with another survey (the third survey)
asking how to design software able to produce these interesting
theorems.

\medskip

After that, we will focus our empirical inquiry on programs, driven by
heuristics based on our findings, able to find interesting theorems.

We have established a \emph{steering committee} to design the surveys
and who will oversee the submission of the surveys to experts around
the world.

The steering committee consists of the following scholars:

\begin{itemize}
\item Thierry Dana-Picard, Jerusalem College of Technology, Jerusalem, Israel;
\item James Davenport, University of Bath, United Kingdom;
\item Pierluigi Graziani, University of Urbino, Urbino, Italy;
\item Pedro Quaresma, University of Coimbra, Coimbra, Portugal;
\item Tom{\'a}s Recio, University  Antonio de Nebrija, Madrid, Spain.
\end{itemize}

\section{Conclusions}
\label{sec:conclusions}

The pursuit of new and interesting theorems in geometry, by automatic
means is an interesting open problem. From the point of view of
generating new information the deductive approach seems the most
appropriated, given that: only logical consequences are ever generated
and also the paths to those new theorems can be analysed from the
point of view of the geometric theory used, i.e. in the process of
generating new facts, geometric proofs of their validity are
produced. Already existing implementations, e.g. the deductive
databases method (DDM) implemented in \emph{JGEx}, and new
implementations, e.g. the \emph{GeoGebra Discovery} and the new
implementation of the DDM, the \emph{OGPCP-GDDM} prover, can be
used. The separation of the uninteresting, trivial facts, from the new
and interesting facts is much harder. The current approaches are based
in ad-hoc measures, proposed by experts from the field, but
nevertheless, not substantiated by any study approaching that
problem. Our goal is to fulfil that gap, to produce a comprehensive
survey, supported in a large set of mathematicians, in order to be
able to return to that question and to develop filters supported by
the findings of that survey.

\paragraph{Acknowledgements} The authors wish to thank Francisco Botana,
Thierry Dana-Picard, James Davenport and Tom{\'a}s Recio for their
support in the pursue of this long term project.


\newcommand{\noopsort}[1]{}\newcommand{\singleletter}[1]{#1}


\appendix
\section{First Survey---Interesting Theorems}
\label{sec:FirstSurvey}

With this survey the goal will be to find the characteristics that make
a theorem interesting, or not. A list of questions about geometric
theorems found to be interesting, or not interesting.

\medskip

For an initial pool of expert on the area it is our intention to use
the network created for the submission of the COST proposal,
\emph{iGEOMXXI}.\footnote{OC-2020-1-24509, Building a Networked
  Environment for Geometric Reasoning (iGEOMXXI), The submitted Action
  (not funded) focused on the exploration of new paradigms and
  methodologies for supporting formal reasoning in the field of
  Geometry. A network of 49 experts from 19 countries.} This survey
will be available online, based on an online survey
tool.\footnote{e.g. \emph{LimeSurvey},
  \url{https://www.limesurvey.org/}}

\subsection{Interesting and Why?}
\label{sec:InterestingandWhy}

A list of situations/explanations about interesting theorems.

\medskip

\begin{center}
  \framebox{\parbox{0.975\textwidth}{Can you describe in detail a
    situation (during classes or lectures) in which you have used the
    adjective interesting applied to a theorem in geometry?

\begin{description}
\item [$n$th Situation] \hrulefill 
\par
\hrulefill 
\par
\hrulefill 
\par
\hrulefill 
\end{description}

Can you explain in detail the reasons why you used the adjective
interesting in the first situation?

\begin{description}
\item [$n$th Explanation] \hrulefill 
\par
\hrulefill 
\par
\hrulefill 
\par
\end{description}
}}
\end{center}

\subsection{Five Interesting Theorems in Geometry}
\label{sec:FiveInterestingTheoremsinGeometry}

A list of 5 questions, each about an interesting theorem.

\bigskip

\begin{center}
  \framebox{\parbox{0.975\textwidth}{Can you list at least five theorems in
    geometry that you consider interesting?

\begin{description}
\item [Theorem $n$] \hrulefill 
\par
\hrulefill 
\par
\hrulefill 
\par
\end{description}

Can you explain in detail the reason for your choice by listing at
least five adjectives that describe characteristics of the previous
theorem making it interesting?

\begin{itemize}
\item[] \hrulefill 
\par
\hrulefill 
\par
\hrulefill 
\par
\end{itemize}
}}
\end{center}

\subsection{Five Not Interesting Theorems in Geometry}
\label{sec:FiveNotInterestingTheoremsinGeometry}

A list of 5 questions, each about a not interesting theorem.

\bigskip

\begin{center}
  \framebox{\parbox{0.975\textwidth}{Can you list at least five theorems in
    geometry that you consider NOT interesting?

\begin{description}
\item [Theorem $n$] \hrulefill 
\par
\hrulefill 
\par
\hrulefill 
\par
\end{description}

Can you explain in detail the reason for your choice by listing at
least five adjectives that describe characteristics of the previous
theorem making it NOT interesting?

\begin{itemize}
\item[] \hrulefill 
\par
\hrulefill 
\par
\hrulefill 
\par
\end{itemize}
}}
\end{center}

\section{Second Survey---Characteristics of Interesting Theorems}
\label{sec:SecondSurvey}

This survey will only be designed after studying the results of the
first survey. The second survey will propose theorems (taken from the
first survey) and will provide characteristics (taken from the first
survey) for each of them. The survey will ask the participants to
express their opinion on characteristics that (presumably) make the
theorems interesting or not interesting.

\medskip

This survey will be available online, based on an online survey
tool.\footnotemark[9]

\medskip

\begin{center}
  \framebox{\parbox{0.975\textwidth}{Please express whether you consider the
    following theorems interesting or not, and why?
    
    \begin{description}
    \item Is Theorem $n$ interesting?\\
      $\square$ YES \quad $\square$ NO
      
      \bigskip
      Why? Because it has the characteristic A.\\
      $\square$ Strongly disagree {\ } $\square$ Disagree {\ }  
      $\square$ Neutral  {\ } 
      $\square$ Agree  {\ } 
      $\square$ Strongly Agree
      
      \medskip
      Why? Because it has the characteristic B.
      
      $\square$ Strongly disagree {\ }
      $\square$ Disagree{\ }
      $\square$ Neutral {\ }
      $\square$ Agree{\ }
      $\square$ Strongly Agree{\ }
      
      \medskip
      Why? Because it has the characteristic C.
      
      $\square$ Strongly disagree {\ }
      $\square$ Disagree{\ }
      $\square$ Neutral {\ }
      $\square$ Agree{\ }
      $\square$ Strongly Agree
      
      \medskip
      Why? Because it has the characteristic D.
      
      $\square$ Strongly disagree {\ }
      $\square$ Disagree{\ }
      $\square$ Neutral {\ }
      $\square$ Agree{\ }
      $\square$ Strongly Agree\\
    \end{description}
  }}
\end{center}


\end{document}